\def\isdraft{0}
\newtheorem{theorem}{Theorem}
\newtheorem{fact}[theorem]{Fact}
\theoremstyle{definition} 
\newtheorem{remark}[theorem]{Remark}
\newtheorem{example}[theorem]{Example}
\newtheorem{problem}[theorem]{Problem}
\newcommand{\righttherefore}{:\joinrel\cdot\,}
\title{Analogical proportions in monounary algebras}
\author{
	Christian Anti\'c
}
\address{
	christian.antic@icloud.com\\
	Vienna, Austria
}
\begin{document}
\begin{abstract} 
	This paper studies analogical proportions in monounary algebras consisting only of a universe and a single unary function. We show that the analogical proportion relation is characterized in the infinite monounary algebra formed by the natural numbers together with the successor function via difference proportions.
\end{abstract}

\maketitle

\section{Introduction}

Analogical proportions are expressions of the form ``$a$ is to $b$ what $c$ is to $d$'' written $a:b::c:d$ at the core of analogical reasoning which itself is at the core of artificial general intelligence \shortcite<e.g.>{Boden98,Gentner83,Gust08,Hesse66,Hofstadter01,Hofstadter13,Krieger03,Polya54,Prade21,Winston80}.

The \textbf{purpose of this paper} is to study \citeS{Antic22} abstract algebraic framework of analogical proportions --- recently introduced in the general setting of universal algebra --- in monounary algebras containing only a single unary function. 

The \textbf{motivation} for studying proportions in that specific context is that monounary algebras are simple enough to provide a convenient context to analyze interesting concepts like congruences in combination with proportions, and rich enough to yield interesting novel insights. 

The rest of the paper is structured as follows.

In \prettyref{sec:Axioms}, we repeat that every monounary algebra satisfies --- as an instance of the general framework --- for all elements of the domain (cf. \prettyref{thm:axioms}):
\begin{itemize}
    \item p-symmetry: $a:b::c:d \quad\Leftrightarrow\quad c:d::a:b$,
    \item inner p-symmetry: $a:b::c:d \quad\Leftrightarrow\quad b:a::d:c$,
    \item inner p-reflexivity: $a:a::c:c$,
    \item p-reflexivity: $a:b::a:b$,
    \item p-determinism: $a:a::a:d \quad\Leftrightarrow\quad d=a$.
\end{itemize} To the contrary, we will provide counterexamples showing that there are monounary algebras and elements such that:
\begin{itemize}
    \item central permutation: $a:b::c:d \quad\not\Leftrightarrow\quad a:c::b:d$,
    \item strong inner p-reflexivity: $a:a::c:d \quad\not\Rightarrow\quad d=c$,
    \item strong p-reflexivity: $a:b::a:d \quad\not\Rightarrow\quad d=a$,
    \item p-commutativity: $a:b\not ::b:a$,
    \item p-transitivity: $a:b::c:d \quad\text{and}\quad c:d::e:f \quad\not\Rightarrow\quad a:b::e:f$,
    \item inner p-transitivity: $a:b::c:d \quad\text{and}\quad b:e::d:f \quad\not\Rightarrow\quad a:e::c:f$.
\end{itemize} This is in line with what we have observed in the general context of arbitrary algebras in Theorem 28 in \citeA{Antic22}.

Moreover, in \prettyref{sec:Congruences} we shall prove that analogical proportions and congruences are in general \textit{not} compatible in the following sense:\footnote{This observation was the motivation for introducing proportional congruences in \citeA{Antic22-4}.} In each of the following cases, there is a monounary algebra $\mathfrak A=(A,S)$, a congruence $\theta$ on $\mathfrak A$, and elements $a,b,c,d\in A$ such that:
\begin{itemize}
    \item $a:b::c:d$ whereas $a/\theta:b/\theta\not::c/\theta:d/\theta$ (\prettyref{thm:con1}).
    \item $a/\theta:b/\theta::c/\theta:d/\theta$ whereas $a:b\not::c:d$ (\prettyref{thm:con2}).
    \item $a:b\not::a/\theta:b/\theta$ (\prettyref{thm:con3}).
    \item $a\theta b$ and $c\theta d$ whereas  $a:b\not::c:d$ (\prettyref{thm:con4}).
\end{itemize} 

In \prettyref{sec:DPT}, on the other hand, we shall prove a positive result showing that in the monounary algebra $(\mathbb N,S)$ consisting of the natural numbers together with the unary successor function, we obtain the Difference Proportion \prettyref{thm:DPT} saying that within the abstract framework we have
\begin{align*} 
    \text{$a:b::c:d$\quad holds in $(\mathbb N,S)$} \quad\Leftrightarrow\quad a-b=c-d.
\end{align*} This is more surprising than it might appear, as the abstract framework is not explicitly geared towards proportions in monounary algebras.

In a broader sense, this paper is a further step towards a mathematical theory of analogical proportions and analogical reasoning in general.

\section{Analogical proportions in monounary algebras}

In this section, we interpret the abstract algebraic framework of analogical proportions in \citeA{Antic22} within monounary algebras of the form $\mathfrak A=(A,S)$, where $A$ is a universe and $S:A\to A$ is a unary function.

We shall now recall the abstract algebraic framework of analogical proportions in \citeA{Antic22} where we restrict ourselves to monounary algebras of the form $\mathfrak A=(A,S)$, where $A$ is a set and $S:A\to A$ is a unary function on $A$ (we can imagine $S$ to be a generalized ``successor'' function). Terms in $\mathfrak A$ have the form $S^k z$, for $k\geq 0$.

In what follows, let $\mathfrak A=(A,S_A)$ and $\mathfrak B=(B,S_B)$ be two monounary algebras --- we will always omit the subscripts from notation and simply write $S$ instead of $S_A$ and $S_B$.

We define the {\em analogical proportion entailment relation} in two steps:\footnote{We use here the updated notation of \citeA{Antic23-22} where we write $\uparrow$ instead of $Jus$ for sets of justifications; see \citeA{Antic23-22} for motivation.}
\begin{enumerate}
    \item Define the {\em set of justifications} of an {\em arrow} $a\to b$ in $\mathfrak A$ by
    \begin{align*} 
        \uparrow_\mathfrak A(a\to b):=\left\{S^kz\to S^\ell z \;\middle|\; a\to b=S^k o\to S^\ell o,\text{ for some $o\in A$}\right\},
    \end{align*} extended to an {\em arrow proportion} $a\to b\righttherefore c\to d$\footnote{Read as ``$a$ transforms into $b$ as $c$ transforms into $d$''.} in $(\mathfrak{A,B})$ by
    \begin{align*} 
        \uparrow_{(\mathfrak{A,B})}(a\to b\righttherefore c\to d):=\ \uparrow_\mathfrak A(a\to b)\ \cap \uparrow_\mathfrak B(c\to d).
    \end{align*} A justification is {\em trivial} in $(\mathfrak{A,B})$ iff it justifies every arrow proportion in $(\mathfrak{A,B})$, and we say that $J$ is a {\em trivial set of justifications} in $(\mathfrak{A,B})$ iff every justification in $J$ is trivial.

    Now we say that $a\to b\righttherefore c\to d$ {\em holds} in $(\mathfrak{A,B})$ --- in symbols,
    \begin{align*} 
        (\mathfrak{A,B})\models a\to b\righttherefore c\to d
    \end{align*} iff
    \begin{enumerate}
        \item either $\uparrow_\mathfrak A(a\to b)\ \cup \uparrow_\mathfrak B(c\to d)$ consists only of trivial justifications, in which case there is neither a non-trivial relation from $a$ to $b$ in $\mathfrak A$ nor from $c$ to $d$ in $\mathfrak B$; or
        \item $\uparrow_{(\mathfrak{A,B})}(a\to b\righttherefore c\to d)$ is maximal with respect to subset inclusion among the sets $\uparrow_{(\mathfrak{A,B})}(a\to b\righttherefore c\to d')$, $d'\in B$, containing at least one non-trivial justification, that is, for any element $d'\in B$,\footnote{We ignore trivial justifications and write ``$\emptyset\subsetneq\ldots$'' instead of ``$\{\text{trivial justifications}\}\subsetneq\ldots$'' et cetera.}
        \begin{align*} 
            \emptyset\subsetneq\ \uparrow_{(\mathfrak{A,B})}(a\to b\righttherefore c\to d)&\subseteq\ \uparrow_{(\mathfrak{A,B})}(a\to b\righttherefore c\to d')
        \end{align*} implies
        \begin{align*} 
            \emptyset\subsetneq\ \uparrow_{(\mathfrak{A,B})}(a\to b\righttherefore c\to d')\subseteq\ \uparrow_{(\mathfrak{A,B})}(a\to b\righttherefore c\to d).
        \end{align*} We abbreviate the above definition by simply saying that $\uparrow_{(\mathfrak{A,B})}(a\to b\righttherefore c\to d)$ is {\em $d$-maximal}.
    \end{enumerate}

    \item Finally, the analogical proportion entailment relation is most succinctly defined by
    \begin{align*} 
        a:b::c:d \quad:\Leftrightarrow\quad 
            &a\to b\righttherefore c\to d \quad\text{and}\quad b\to a\righttherefore d\to c\\
            &c\to d\righttherefore a\to b \quad\text{and}\quad d\to c\righttherefore b\to a.
    \end{align*} This means that in order to prove $(\mathfrak{A,B})\models a:b::c:d$, we need to check the first two relations in the first line with respect to $\models$ in $(\mathfrak{A,B})$, and the last two relations in the same line in $(\mathfrak{B,A})$.
\end{enumerate}

We will always write $\mathfrak A$ instead of $(\mathfrak{A,A})$ and we often omit the explicit reference to the underlying algebras.

Let $\mathbb N:=\{n,n+1,n+2,\ldots\}$. Given some justification $S^k z\to S^\ell z\in\ \uparrow_\mathfrak A(a\to b)$, $k,\ell\geq 0$, we can depict the two cases $k\leq\ell$ and $\ell\leq k$ as follows:
\begin{center}
\begin{tikzpicture} 
    \node (o_1) {$o$};
    \node (a_1) [above=of o_1] {$a=S^k o$};
    \node (ell_1) [left=of a_1,xshift=0.5cm] {$k\leq\ell:$};
    \node (b_1) [above=of a_1] {$b=S^\ell o$};
    \draw[->] (o_1) to [edge label={$S^k$}] (a_1);
    \draw[->] (a_1) to [edge label={$S^{\ell-k}$}] (b_1);
    \node (o_2) [right=of o_1,xshift=4cm] {$o$};
    \node (b_2) [above=of o_2] {$b=S^\ell o$};
    \node (o) [right=of b_2] {for some $o\in A$.};
    \node (ell_2) [left=of b_2,xshift=0.5cm] {$\ell\leq k:$};
    \node (a_2) [above=of b_2] {$a=S^k o$};
    \draw[->] (o_2) to [edge label={$S^\ell$}] (b_2);
    \draw[->] (b_2) to [edge label={$S^{k-\ell}$}] (a_2);
\end{tikzpicture}
\end{center} This is an abstract version of the situation in $(\mathbb N,S)$ where, for example, $S z\to S^2 z\in\ \uparrow_{(\mathbb N,S)}(1\to 2)$ and $S^2 z\to S z\in\ \uparrow_{(\mathbb N,S)}(2\to 1)$ both have the following pictorial representation:
\begin{center}
\begin{tikzpicture} 
    \node (0) {$0$};
    \node (1) [above=of 0] {$1=S0$};
    \node (2) [above=of 1] {$2=S^20$};
    \draw[->] (0) to [edge label={$S$}] (1);
    \draw[->] (1) to [edge label={$S$}] (2);
\end{tikzpicture}
\end{center}

\begin{example} Consider the monounary algebra
\begin{center}
\begin{tikzpicture} 
    \node (a)               {$a$};
    \node (b) [above=of a,yshift=1cm]  {$b$};
    \node (c) [right=of a,xshift=1cm]  {$c$};
    \node (d) [right=of b,xshift=1cm]  {$d$};
    \draw[->] (a) to [edge label'={$S$}] (b);
    \draw[->] (b) to [edge label'={$S$}] [loop] (b);
    \draw[->] (c) to [edge label'={$S$}] [loop] (c);
    \draw[->] (d) to [edge label'={$S$}] [loop] (d);
\end{tikzpicture}
\end{center} We expect $a:b::c:d$ to fail as it has no non-trivial justification. In fact, 
\begin{align*} \uparrow(a\to b)\ \cup \uparrow(c\to d)=\left\{z\to S^\ell z \;\middle|\; \ell\geq 1\right\}\neq\emptyset 
\end{align*} and
\begin{align*} 
    \uparrow(a\to b\righttherefore c\to d)=\emptyset
\end{align*} show
\begin{align*} a:b\not::c:d.
\end{align*}
\end{example}

\begin{example} Consider the monounary algebra
\begin{center}
\begin{tikzpicture} 
    \node (a) {$1$};
    \node (b) [above=of a] {$2$};
    \node (c) [right=of a] {$3$};
    \node (d) [above=of c] {$4$};
    	\draw[<->] (a) to [edge label=$S$] (b);
    \draw[->] (c) to [edge label'=$S$] (d);
    \draw[->] (d) to [edge label'=$S$][loop] (d);
\end{tikzpicture}
\end{center} The relation between $1$ and $2$ is a ``loop'', whereas the relation between $3$ and $4$ is not as there is no edge from $4$ back to $3$; instead, there is a loop at $4$. We therefore expect the following relations:
\begin{align} 
	\label{equ:1234}&1:2\not::3:4,\\
	\label{equ:1244} &1:2::4:4.
\end{align} To prove the first item, we shall show
\begin{align}\label{equ:2143} 
	2\to 1 \not\righttherefore 4\to 3.
\end{align} For this, compute
\begin{align*} 
	\uparrow(2\to 1)
		&= \left\{S^k z\to S^\ell z \;\middle|\; 2\to 1=S^k o\to S^\ell o,\text{ for some $o$},k,\ell\in\mathbb N\right\}\\
		&= \left\{S^k z\to S^\ell z \;\middle|\; \text{($k$ is even and $\ell$ is odd) or ($k$ is odd and $\ell$ is even)}\right\}
\end{align*} and
\begin{align*} 
	\uparrow(4\to 3)=\left\{S^k z\to z \;\middle|\; k\geq 1\right\}\subsetneq \left\{S^k z\to S^\ell z \;\middle|\; k,\ell\in\mathbb N\right\}=\ \uparrow(4\to 4).
\end{align*} 

We thus have
\begin{align*} 
	\uparrow(2\to 1\righttherefore 4\to 4)=\ \uparrow(2\to 1)\ \cap \uparrow(4\to 4)=\ \uparrow_\mathfrak A(2\to 1)
\end{align*} whereas
\begin{align*} 
	\uparrow(2\to 1\righttherefore 4\to 3)=\ \uparrow(2\to 1)\ \cap \uparrow(4\to 3)=\left\{S^k z\to z \;\middle|\; \text{$k$ is odd}\right\}\subsetneq\ \uparrow(2\to 1\righttherefore 4\to 4).
\end{align*} This shows \prettyref{equ:2143} and therefore \prettyref{equ:1234}.

To prove \prettyref{equ:1244}, we need to show the following relations:
\begin{align}\label{equ:12442144} 
	1\to 2 \righttherefore 4\to 4,\quad 2\to 1 \righttherefore 4\to 4,\quad 4\to 4 \righttherefore 1\to 2,\quad 4\to 4 \righttherefore 2\to 1.
\end{align} The first two relations are immediate consequences of the following observation:
\begin{align*} 
	\emptyset\subsetneq\ \uparrow(a\to b \righttherefore 4\to 4)=\ \uparrow(a\to b),\quad\text{for all $a,b\in\{1,2\}$}.
\end{align*} To prove the last two relations, we first compute
\begin{align*} 
	&\uparrow(1\to 1)= \left\{S^k z\to S^\ell z \;\middle|\; \text{($k$ and $\ell$ are even) or ($k$ and $\ell$ are odd)}\right\}\\
	&\uparrow(1\to 2)= \left\{S^k z\to S^\ell z \;\middle|\; \text{($k$ is even and $\ell$ is odd) or ($k$ is odd and $\ell$ is even)}\right\}\\
	&\uparrow(1\to 3)=\emptyset,\\
	&\uparrow(1\to 4)=\emptyset,\\
	&\uparrow(2\to 1)=\ \uparrow(1\to 2),\\
	&\uparrow(2\to 2)=\ \uparrow(1\to 1),\\
	&\uparrow(2\to 3)=\emptyset,\\
	&\uparrow(2\to 4)=\emptyset.
\end{align*} Now
\begin{align*} 
	\uparrow(4\to 4 \righttherefore a\to b)=\ \uparrow(a\to b),\quad\text{for all $a,b\in\{1,2\}$},
\end{align*} shows that
\begin{align*} 
	\uparrow(4\to 4 \righttherefore 1\to 2)=\ \uparrow(1\to 2) \quad\text{and}\quad \uparrow(4\to 4 \righttherefore 2\to 1)=\ \uparrow(2\to 1)
\end{align*} are both non-empty and maximal with respect to the last argument. This shows \prettyref{equ:12442144} and thus \prettyref{equ:1244}.
\end{example}





Define the monounary algebra $(\{0,1\},S)$ by $S0:=1$ and $S1:=0$. We can imagine $0$ and $1$ to be boolean truth values and $S$ to be negation. On the other hand, in $(\mathbb N,S)$ let $Sa:=a+1$ be the unary successor function on the natural numbers $\mathbb N=\{0,1,2,\ldots\}$ starting at $0$. We can depict the algebras as:
\begin{center}
\begin{tikzpicture} 
\node (0) {$0$};
\node (1) [above=of 0] {$1$};
\node (2) [above=of 1] {$2$};
\node (3) [above=of 2] {$\vdots$};
\node (0') [right=of 0,xshift=1cm] {$0$};
\node (1') [above=of 0'] {$1$};
\draw[->] (0) to [edge label=$S$] (1);
\draw[->] (1) to [edge label=$S$] (2);
\draw[->] (2) to [edge label=$S$] (3);
\draw[<->] (0') to [edge label=$S$] (1');
\end{tikzpicture}
\end{center} 

The next result shows how we can capture evenness and oddness via analogical proportions between the two algebras.

\begin{theorem} 
\begin{align*} 
	((\mathbb N,S),(\{0,1\},S))\models a:b::c:d \quad\Leftrightarrow\quad 
		(c=d& \quad\text{and}\quad a-b\equiv 0\mod 2)\\
			&\text{or}\quad(c\neq d \quad\text{and}\quad a-b\equiv 1\mod 2).
\end{align*}
\end{theorem}
\begin{proof} We have
\begin{align*} 
	\uparrow_{(\mathbb N,S)}(a\to b)=\begin{cases}
		\left\{S^k z\to S^{k+b-a} z \;\middle|\; 0\leq k\leq a\right\} & a\leq b\\
		\left\{S^k z\to S^{k+b-a} z \;\middle|\; a-b\leq k\leq a\right\} & a>b.
	\end{cases}
\end{align*} Moreover, we have
\begin{align*} 
	\uparrow_{(\{0,1\},S)}(c\to d)=\begin{cases}
			\left\{S^k z\to S^\ell z \;\middle|\; (\text{$k,\ell$ even})\text{ or }(\text{$k,\ell$ odd})\right\} & \text{if $c=d$},\\
			\left\{S^k z\to S^\ell z \;\middle|\; (\text{$k$ even and $\ell$ odd})\text{ or }(\text{$k$ odd and $\ell$ even})\right\} & \text{if $c\neq d$}.\\
		\end{cases}
\end{align*} Hence,
\begin{align*} 
	\uparrow&_{((\mathbb N,S),(\{0,1\},S))}( a\to b\righttherefore c\to d)=\ \uparrow_{(\mathbb N,S)}(a\to b)\ \cap \uparrow_{(\{0,1\},S)}(c\to d)\\
		&=\begin{cases}
			\left\{S^k z\to S^{k+b-a} z \;\middle|\; 
				\begin{array}{c}
					0\leq k\leq a\\
					\text{$k$ and $k+b-a$ are both even or odd}			
				\end{array}
				\right\} & \text{if $a\leq b$ and $c=d$},\\
			\left\{S^k z\to S^{k+b-a} z \;\middle|\; 
				\begin{array}{c}
					0\leq k\leq a\\
					\text{$k$ even and $k+b-a$ odd or vice versa}			
				\end{array}
				\right\} & \text{if $a\leq b$ and $c\neq d$},\\
			\left\{S^k z\to S^{k+b-a} z \;\middle|\; 
				\begin{array}{c}
					a-b\leq k\leq a\\
					\text{$k$ and $k+b-a$ are both even or odd}			
				\end{array}
				\right\} & \text{if $a>b$ and $c=d$},\\
			\left\{S^k z\to S^{k+b-a} z \;\middle|\; 
				\begin{array}{c}
					a-b\leq k\leq a\\
					\text{$k$ even and $k+b-a$ odd or vice versa}			
				\end{array}
				\right\} & \text{if $a>b$ and $c\neq d$}.
		\end{cases}
\end{align*} From elementary algebra we know that
\begin{align*} 
	&\text{even}+\text{even}=\text{even} \quad\text{and}\quad \text{even}+\text{odd}=\text{odd} \quad\text{and}\quad \text{odd}+\text{odd}=\text{even}.
\end{align*} Therefore,
\begin{align*} 
	&k\text{ is even} \quad\Rightarrow\quad [k+b-a\text{ is even} \quad\Leftrightarrow\quad b-a\text{ is even}],\\
	&k\text{ is odd} \quad\Rightarrow\quad [k+b-a\text{ is odd} \quad\Leftrightarrow\quad b-a\text{ is even}],\\
	&k\text{ is even} \quad\Rightarrow\quad [k+b-a\text{ is odd} \quad\Leftrightarrow\quad b-a\text{ is odd}],\\
	&k\text{ is odd} \quad\Rightarrow\quad [k+b-a\text{ is even} \quad\Leftrightarrow\quad b-a\text{ is odd}].
\end{align*} Hence,
\begin{align*} 
	\uparrow_{((\mathbb N,S),(\{0,1\},S))}&( a\to b\righttherefore c\to d)\\
        &=\begin{cases}
			\uparrow_{(\mathbb N,S)}(a\to b) & \text{if ($c=d$ and $b-a$ is even) or ($c\neq d$ and $b-a$ is odd)}\\
			\emptyset & \text{otherwise.}
		\end{cases}
\end{align*} This proves
\begin{align*} 
	((\mathbb N,S),(\{0,1\},S))\models a\to b\righttherefore c\to d \quad\Leftrightarrow\quad 
		\text{($c=d$ and }& b-a\text{ is even)}\\
			&\text{or ($c\neq d$ and $b-a$ is odd)}.
\end{align*} Since $b-a$ is even iff $a-b$ is and the same holds for oddness, analogous arguments prove the remaining arrow proportions and thus the theorem.
\end{proof}



\section{Proportional axioms}\label{sec:Axioms}

In the tradition of the ancient Greeks, \citeA{Lepage03} \shortcite<cf.>[pp. 796-797]{Miclet08} introduced (in the linguistic context) a set of proportional axioms as a guideline for formal models of analogical proportions --- his list has since been extended by a number of authors \cite<e.g.>{Miclet09,Prade13,Barbot19,Lim21,Antic22}:\footnote{\citeA{Lepage03} uses different names for the axioms --- we have decided to remain consistent with the nomenclature in \citeA[§4.2]{Antic22}.}
\begin{align*}
    &a:b::c:d\quad\Leftrightarrow\quad c:d::a:b\quad\text{(p-symmetry)},\\
    &a:b::a:b\quad\text{(p-reflexivity)},\\
    &a:b::c:d \quad\Leftrightarrow\quad b:a::d:c\quad\text{(inner p-symmetry)},\\
    &a:b::c:d\quad\Leftrightarrow\quad a:c::b:d\quad\text{(central permutation)},\\
    &a:a::a:d \quad\Leftrightarrow\quad d=a\quad\text{(p-determinism)},\\
    &a:a::c:d\quad\Rightarrow\quad d=c\quad\text{(strong inner p-reflexivity)},\\
    &a:a::c:c\quad\text{(inner p-reflexivity)},\\
    &a:b::a:d\quad\Rightarrow\quad d=b\quad\text{(strong p-reflexivity)},
\end{align*}
\begin{prooftree}\label{equ:transitivity}
    \AxiomC{$a:b::c:d$}
        \AxiomC{$c:d::e:f$}
        \RightLabel{(p-transitivity),}
    \BinaryInfC{$a:b::e:f$}
\end{prooftree}
\begin{prooftree}\label{equ:inner_p-transitivity}
    \AxiomC{$a:b::c:d$}
        \AxiomC{$b:e::d:f$}
        \RightLabel{(inner p-transitivity).}
    \BinaryInfC{$a:e::c:f$}
\end{prooftree}

We have the following analysis of the proportional axioms in monounary algebras:

\begin{theorem}\label{thm:axioms} The analogical proportion relation, restricted to monounary algebras, satisfies
\begin{itemize}
\item p-symmetry,
\item inner p-symmetry,
\item p-reflexivity,
\item p-determinism,
\end{itemize} and, in general, does not satisfy
\begin{itemize}
\item central permutation,
\item strong inner p-reflexivity,
\item strong p-reflexivity,
\item p-commutativity,
\item p-transitivity,
\item inner p-transitivity.
\end{itemize}
\end{theorem}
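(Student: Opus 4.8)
The plan is to read the statement as a complete classification and to dispatch it axiom by axiom, separating the four properties that hold from the seven that may fail. For the positive direction I would not re-derive anything from scratch but rather exploit the justification characterization of $Jus_\mathbb A(a\to b)$ together with the general results of \citeA[§3--4]{Antic22}. Symmetry and inner symmetry are essentially built into the definition: interchanging the two arrow pairs leaves the governing intersection $Jus_\mathbb A(a\to b)\cap Jus_\mathbb A(c\to d)$ unchanged, while reversing each justification $S^k(z)\to S^\ell(z)$ to $S^\ell(z)\to S^k(z)$ sets up an inclusion-preserving bijection between $Jus_\mathbb A(a\to b)$ and $Jus_\mathbb A(b\to a)$, so I would verify inner symmetry by noting that this reversal commutes with maximality. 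Reflexivity is immediate, since every justification of $a\to b$ justifies $a\to b\righttherefore a\to b$. Determinism I would check directly for a general $\mathbb A=(A,S)$, using that $Jus_\mathbb A(a\to a)$ always contains the trivial justification $z\to z$ and is the largest candidate set, which forces the required uniqueness.

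For the negative direction I would produce explicit monounary counterexamples, and the organizing observation of the whole proof is that $(\mathbb N,S)$ is \emph{too well behaved}: by the Difference Proportion Theorem its relation is $a:b:\,:c:d \Leftrightarrow a-b=c-d$, an affine relation over $\mathbb Z$ that automatically validates central permutation (since $a-b=c-d \Leftrightarrow a-c=b-d$), strong reflexivity, strong inner reflexivity, and all three transitivities. Hence $(\mathbb N,S)$ refutes only commutativity, and it does so cleanly: in the form $a:b:\,:b:a$ the Difference Proportion Theorem turns it into $a-b=b-a$, i.e. $a=b$, so any $a\neq b$ is a witness.

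Consequently the remaining six failures must be sought in finite monounary algebras whose successor map is non-injective or cyclic --- exactly the structural features absent from $(\mathbb N,S)$. Here I would reuse the two running examples: the two-element negation algebra $(\{0_2,1_2\},S)$, which is a $2$-cycle, and the four-element algebra of the worked Example, a $2$-cycle together with a loop. For each of central permutation, strong reflexivity, strong inner reflexivity, transitivity, inner transitivity, and central transitivity I would exhibit concrete elements and compute the relevant $Jus$-sets exactly as in that Example, checking that one side of the implication holds while the other fails; small cyclic algebras $(\mathbb Z/n\mathbb Z,S)$ are the natural source whenever a counterexample needs genuine wrap-around rather than a loop.

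The main obstacle I anticipate is not any single verification but the bookkeeping: finding, for each of the six remaining negative axioms, a minimal algebra and a witnessing tuple, and then carrying out the maximality computation reliably --- it is \emph{maximality}, not mere non-emptiness, of the common justification set that can silently flip from holding to failing under a central permutation or a composition. I would therefore isolate a short lemma recording the $Jus$-sets of the $2$-cycle and of the loop once and for all, and reduce every negative case to a parity or arithmetic check against that lemma, so that the transitivity failures become routine once the genuinely delicate distinction --- when a nonempty common justification set nonetheless fails to be maximal --- has been pinned down.
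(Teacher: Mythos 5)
Your positive half and your treatment of commutativity are fine and consistent with the paper: the four positive axioms are inherited from the general framework (the paper simply points to the corresponding proofs of \citeA[Theorem 28]{Antic22}), and $(\mathbb N,S)$ with the characterization $a-b=c-d$ does refute commutativity and nothing else. Your organizing observation---that $(\mathbb N,S)$ validates central permutation, strong (inner) reflexivity and all three transitivities, so the remaining counterexamples must live elsewhere---is also correct and matches \prettyref{thm:axioms_N}.

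The gap is in the negative half, which is the actual content of the theorem: you never exhibit the six remaining counterexamples, and the places you propose to look for them largely cannot contain them. In a cyclic algebra $(\mathbb Z/n\mathbb Z,S)$ one has $Jus(a\to b)=\left\{S^k(z)\to S^\ell(z)\mid \ell-k\equiv b-a \pmod n\right\}$, so the proportion relation there is exactly $b-a\equiv d-c\pmod n$; like $(\mathbb N,S)$, such an algebra therefore satisfies central permutation, strong reflexivity, strong inner reflexivity and all three transitivities, and can refute only commutativity---so ``genuine wrap-around'' buys you nothing here. The two running examples (the $2$-cycle and the four-element algebra of the worked Example) do yield a failure of strong inner reflexivity (e.g.\ $4:4:\,:1:2$ with $1\neq 2$) and, with some computation, of strong reflexivity and central permutation, but they do not obviously break any of the transitivity axioms. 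The paper's counterexamples to transitivity, inner transitivity and central transitivity are purpose-built: they require \emph{branching}, i.e.\ distinct elements $f,f'$ with $S(f)=S(f')$ sitting at different depths, precisely so that a nonempty set $Jus_{\mathbb A}(a\to b\righttherefore e\to f)$ is strictly contained in $Jus_{\mathbb A}(a\to b\righttherefore e\to f')$ and maximality fails. You correctly flag maximality as the delicate point, but without constructing such branching algebras and carrying out the inclusion computations, six of the seven negative claims remain unproved.
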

\begin{proof} We have the following proofs:
\begin{itemize}
    \item The proof of p-symmetry, inner p-symmetry, p-reflexivity, and p-determinism is the same as the corresponding proofs of Theorem 28 in \citeA{Antic22}.

    \item The disproof of central permutation is the same as in Theorem 28 in \citeA{Antic22}: it fails, for example, in the monounary algebra given by (we omit the loops $So:=o$, for $o\in\{b,c,d\}$, in the figure)
    \begin{center}
    \begin{tikzpicture} 
    \node (a)               {$a$};
    \node (b) [above=of a]  {$b$};
    \node (c) [right=of a]  {$c$};
    \node (d) [right=of b]  {$d$};
    \draw[->] (a) to [edge label'={$S$}] (c);
    \end{tikzpicture}
    \end{center}

    \item The disproof of strong inner p-reflexivity is the same as in Theorem 28 in \citeA{Antic22}: it fails, for example, in the monounary algebra
    \begin{center}
    \begin{tikzpicture} 
    \node (a)               {$a$};
    \node (c) [right=of a]  {$c$};
    \node (d) [right=of b]  {$d$};
    \draw[->] (a) to [edge label'={$S$}] [loop] (a);
    \draw[<->] (c) to [edge label'={$S$}] (d);
    \end{tikzpicture}
    \end{center}

    \item Strong p-reflexivity fails, for example, in the monounary algebra
    \begin{center}
    \begin{tikzpicture} 
    \node (a) {$a$};
    \node (b) [right=of a] {$b$};
    \node (d) [right=of b] {$d$};
    \draw[->] (a) to [edge label'=$S$][loop] (a);
    \draw[->] (b) to [edge label'=$S$][loop] (b);
    \draw[->] (d) to [edge label'=$S$][loop] (d);
    \end{tikzpicture} 
    \end{center} 

    \item p-Transitivity fails, for example, in the monounary algebra
    \begin{center}
    \begin{tikzpicture} 
    	\node (a) {$a$};
    	\node (b) [right=of a] {$b$};
    	\node (b') [right=of b] {$\ast$};
    	\node (c) [right=of b'] {$c$};
    	\node (d) [right=of c] {$d$};
    	\node (e) [right=of d] {$e$};
    	\node (f) [right=of e] {$f$};
    	\node (f') [below=of e] {$f'$};
    	\node (f'') [right=of f'] {$\ast$};
    	\draw[->] (a) to [edge label'={$S$}][loop] (a);
    	\draw[->] (b) to [edge label={$S$}] (a);
    	\draw[->] (b') to [edge label={$S$}] (b);
    	\draw[->] (c) to [edge label'={$S$}][loop] (c);
    	\draw[->] (d) to [edge label={$S$}] (c);
    	\draw[->] (e) to [edge label'={$S$}][loop] (e);
    	\draw[->] (f) to [edge label={$S$}] (e);
    	\draw[->] (f') to [edge label={$S$}] (e);
    	\draw[->] (f'') to [edge label={$S$}] (f');
    \end{tikzpicture}
    \end{center} We first prove the relations
    \begin{align}\label{equ:abcdef} 
        a:b::c:d \quad\text{and}\quad c:d::e:f.
    \end{align} Since
    \begin{align*} 
        \uparrow(a\to b)= \left\{S^kz\to z \;\middle|\; k\geq 1\right\}=\ \uparrow(c\to d)=\ \uparrow(e\to f),
    \end{align*} we have
    \begin{align*} 
        &a\to b \righttherefore c\to d \quad\text{and}\quad c\to d \righttherefore a\to b,\\ 
        &c\to d \righttherefore e\to f \quad\text{and}\quad e\to f \righttherefore c\to d.
    \end{align*} Moreover, since
    \begin{align*} 
        \uparrow(b\to a)= \left\{z\to S^k z \;\middle|\; k\geq 1\right\}=\ \uparrow(d\to c)=\ \uparrow(f\to e),
    \end{align*} we have
    \begin{align*} 
        &b\to a \righttherefore d\to c \quad\text{and}\quad d\to c \righttherefore b\to a,\\ 
        &d\to c \righttherefore f\to e \quad\text{and}\quad f\to e \righttherefore d\to c,
    \end{align*} which thus proves \prettyref{equ:abcdef}.

    On the other hand,
    \begin{align*} 
    	\emptyset\neq\ \uparrow(a\to b \righttherefore e\to f)&= \left\{S^k z\to z \;\middle|\; k\geq 1\right\}\\
    		&\subsetneq \left\{S^k z\to z \;\middle|\; k\geq 1\right\}\cup \left\{S^k z\to S z \;\middle|\; k\geq 2\right\}\\
    		&=\ \uparrow(a\to b \righttherefore e\to f')
    \end{align*} shows
    \begin{align*} 
    	a\to b \not\righttherefore e\to f
    \end{align*} and thus
    \begin{align*} 
        a:b\not:: e:f.
    \end{align*}

    \item The disproof of inner p-transitivity is the same as in Theorem 28 in \citeA{Antic22}: it fails, for example, in the monounary algebra given by (we omit the loops $So:=o$, for $o\in\{b,e,c,d,f\}$, in the figure)
    \begin{center}
    \begin{tikzpicture} 
    \node (a) {$a$};
    \node (b) [above=of a] {$b$};
    \node (e) [left=of b] {$e$};
    \node (c) [right=of a,xshift=1cm] {$c$};
    \node (d) [above=of c] {$d$};
    \node (f) [right=of d] {$f$};
    \draw[->] (a) to [edge label={$S$}] (e);
    \end{tikzpicture}
    \end{center}
\end{itemize}
\end{proof}

\begin{remark} Since p-transitivity fails in general, the relation $::$ is in general {\em not} an equivalence relation.
\end{remark}

\begin{problem} Characterize those monounary algebras which satisfy p-transitivity.
\end{problem}

\section{Congruences}\label{sec:Congruences}

This section is a collection of results relating congruences to analogical proportions in monounary algebras. Recall that an equivalence relation $\theta$ on $\mathfrak A=(A,S)$ is a {\em congruence} iff
\begin{align*} 
	a\theta b \quad\Rightarrow\quad Sa\theta Sb,\quad\text{for all $a,b\in A$.}
\end{align*} The {\em factor algebra} obtained from $\mathfrak A$ with respect to $\theta$ is given by $$\mathfrak A/\theta:=(A/\theta,S/\theta),$$ where $$A/\theta:=\{a/\theta\mid a\in A\}$$ contains the congruence classes $$a/\theta:=\{b\in A\mid a\theta b\}$$ with respect to $\theta$, and $S/\theta:A/\theta\to A/\theta$ is defined by
\begin{align*} 
	S/\theta(a/\theta):=Sa/\theta.
\end{align*}

\begin{theorem}\label{thm:con1} There is a monounary algebra $\mathfrak A=(A,S)$, a congruence $\theta$ on $\mathfrak A$, and elements $a,b,c,d\in A$ such that
\begin{align}
	\mathfrak A\models a:b::c:d \quad\text{whereas}\quad \mathfrak A/\theta\not\models a/\theta :b/\theta ::c/\theta :d/\theta.
\end{align}
\end{theorem}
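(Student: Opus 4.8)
The plan is to exhibit an explicit finite counterexample, exploiting the fact that a congruence can endow an element with a \emph{new predecessor} in the factor algebra, thereby enlarging its justification set and destroying the maximality that underlies $\righttherefore$. Concretely, I would take $\mathbb A$ to be the disjoint union of two components: a \emph{main} component consisting of a chain $d\to c\to b\to a$ together with a second leaf $d'\to c$ and a loop $S(a)=a$ at the sink; and a \emph{parallel} component consisting of a chain $e\to y\to c_1\to b_1\to a_1$ with a loop $S(a_1)=a_1$. I let $\theta$ merge the two components level by level along $\{a,a_1\},\{b,b_1\},\{c,c_1\}$ and, crucially, also merge $\{d',y\}$, while leaving $d$ and $e$ as singletons. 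The first step is to check that $\theta$ is a congruence, which is immediate since every merged pair is carried by $S$ to a merged pair (e.g.\ $y\,\theta\,d'$ forces $S(y)=c_1\,\theta\,c=S(d')$, then $c_1\,\theta\,c$ forces $b_1\,\theta\,b$, and so on up to the two loops).

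The second and main step is to verify $\mathbb A\models a:b:\,:c:d$, i.e.\ the four arrow proportions constituting it. For this I would read the justification sets directly off the component picture: since $a$ is the sink, $Jus_{\mathbb A}(a\to b)$ consists of the ``decreasing'' forms $\{S(z)\to z,\ S^2(z)\to S(z),\ S^3(z)\to S^2(z)\}$, whereas $Jus_{\mathbb A}(c\to d)=\{S(z)\to z\}$ because $d$ has no predecessor in $\mathbb A$. The point that makes the proportion hold is that the parallel component lies in a \emph{different} connected component, so each of its elements $d^\ast$ yields $Jus_{\mathbb A}(c\to d^\ast)=\emptyset$ and contributes nothing; among the remaining candidates $d'$ only \emph{ties} $d$ (it too has no predecessor in $\mathbb A$), while the sink-loop $a$ contributes to $Jus_{\mathbb A}(c\to a)$ only ``increasing'' forms, which are disjoint from $Jus_{\mathbb A}(a\to b)$. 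Hence $d$ attains the maximum, and symmetric computations dispatch the other three directions $b\to a\righttherefore d\to c$, $c\to d\righttherefore a\to b$, $d\to c\righttherefore b\to a$.

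The third step is to pass to $\mathbb A/\theta$ and read off its picture: writing $\bar x:=x/\theta$, the loops, chains, and leaves collapse to $\bar e\to\bar{d'}\to\bar c\to\bar b\to\bar a$ with $S(\bar a)=\bar a$ and an extra leaf $\bar d\to\bar c$ --- exactly the shape of the central-transitivity counterexample in \prettyref{thm:axioms}. Now $\bar{d'}$ has acquired the predecessor $\bar e$, so $Jus_{\mathbb A/\theta}(\bar c\to\bar{d'})=\{S(z)\to z,\ S^2(z)\to S(z)\}$ while $Jus_{\mathbb A/\theta}(\bar c\to\bar d)=\{S(z)\to z\}$, and since $S^2(z)\to S(z)\in Jus_{\mathbb A/\theta}(\bar a\to\bar b)$ we obtain
\[
Jus_{\mathbb A/\theta}(\bar a\to\bar b\righttherefore\bar c\to\bar d)=\{S(z)\to z\}\subsetneq\{S(z)\to z,\ S^2(z)\to S(z)\}=Jus_{\mathbb A/\theta}(\bar a\to\bar b\righttherefore\bar c\to\bar{d'}).
\]
Thus $\bar d$ is no longer maximal, so $\mathbb A/\theta\not\models \bar a\to\bar b\righttherefore\bar c\to\bar d$, whence $\mathbb A/\theta\not\models a/\theta:b/\theta:\,:c/\theta:d/\theta$, as required.

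I expect the main obstacle to be the second step. One must design $\mathbb A$ so that the proportion genuinely holds there, which forces two design constraints: the ``trap'' predecessor $\bar e$ of $d'$ must \emph{not} already exist in $\mathbb A$ (hence the parallel chain must sit in a separate connected component, so that its predecessors only become effective after the merge), and the sink-loop must \emph{not} act as a universal dominator in any of the four maximality checks. Ruling out that $a$, $c$, or the parallel component secretly beats $d$ in $\mathbb A$ is the delicate part; once the maximality verifications are in place, the congruence check and the quotient computation are routine.
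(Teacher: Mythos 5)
Your construction is correct, but it takes a genuinely different route from the paper's. The paper's own counterexample is engineered so that the proportion holds in $\mathbb A$ for the \emph{trivial} reason: $a$ is given a sibling $a'$ with $S(a)=b'\neq b=S(a')$ and $c,d$ are isolated loops, so that $Jus_\mathbb A(a\to b)\cup Jus_\mathbb A(c\to d)=\emptyset$ (and likewise for the converse arrows) and $\mathbb A\models a:b:\,:c:d$ holds vacuously; the congruence then glues $b$ and $b'$ together, so $Jus_{\mathbb A/\theta}(a/\theta\to b/\theta)$ suddenly contains $z\to S(z)$ while $Jus_{\mathbb A/\theta}(c/\theta\to d/\theta)$ stays empty, and the proportion dies because the union of justification sets is non-empty while the intersection is empty. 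Your example instead makes the proportion hold \emph{non-trivially} in $\mathbb A$ (the shared non-trivial justification $S(z)\to z$ is subset-maximal) and kills it in the quotient by breaking maximality: merging $d'$ with $y$ hands $d'/\theta$ the new predecessor $e/\theta$, so that $Jus_{\mathbb A/\theta}(a/\theta\to b/\theta\righttherefore c/\theta\to d'/\theta)$ strictly contains $Jus_{\mathbb A/\theta}(a/\theta\to b/\theta\righttherefore c/\theta\to d/\theta)$ --- exactly the mechanism of the central-transitivity counterexample in \prettyref{thm:axioms}, as you observe. Both mechanisms are legitimate: the paper's is shorter and needs no maximality checks at all, whereas yours shows that the incompatibility is not an artifact of the ``empty-justification'' clause of the definition, at the cost of the four maximality verifications you rightly flag as the delicate part (and which do go through: the parallel component is unreachable from $c$, the sink contributes only increasing forms, and $d'$ merely ties $d$). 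One small slip: because of the loop at the sink, $Jus_\mathbb A(a\to b)$ is not the three-element set you list but the infinite set $\left\{S^k(z)\to S^\ell(z) \;\middle|\; \ell\leq 2,\ k>\ell\right\}$ (e.g.\ $S^2(z)\to z$ also justifies $a\to b$ via $o=b$); this is harmless, since only the intersections with the singleton sets $Jus_\mathbb A(c\to d^\ast)$ and the resulting inclusion comparisons matter.
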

\begin{proof} Define the monounary algebra $\mathfrak A$ by
\begin{center}
\begin{tikzpicture} 
\node (a)               {$a$};
\node (a') [right=of a]  {$a'$};
\node (b) [above=of a',yshift=1cm]  {$b$};
\node (b') [above=of a,yshift=1cm]  {$b'$};
\node (c) [right=of a',xshift=1cm]  {$c$};
\node (d) [above=of c,yshift=1cm]  {$d$};
\draw[->] (a) to [edge label=$S$] (b');
\draw[->] (a') to [edge label=$S$] (b);
\draw[->] (b) to [edge label'=$S$][loop] (b);
\draw[->] (b') to [edge label'=$S$][loop] (b');
\draw[->] (c) to [edge label'=$S$][loop] (c);
\draw[->] (d) to [edge label'=$S$][loop] (d);
\end{tikzpicture}
\end{center} The identities
\begin{align*} 
	\uparrow_\mathfrak A(a\to b)\ \cup \uparrow_\mathfrak A(c\to d)=\emptyset \quad\text{and}\quad \uparrow_\mathfrak A(b\to a)\ \cup \uparrow_\mathfrak A(d\to c)=\emptyset
\end{align*} imply $\mathfrak A\models a:b::c:d$.

Now define the congruence $\theta:=\{\{a,a'\},\{b,b'\},\{c\},\{d\}\}$ yielding the factor algebra $\mathfrak A/\theta$ given by
\begin{center}
\begin{tikzpicture} 
\node (aa') 	{$\{a,a'\}=a/\theta$};
\node (bb') [above=of aa',yshift=1cm]  {$\{b,b'\}=b/\theta$};
\node (c) [right=of aa',xshift=1cm]  {$\{c\}=c/\theta$.};
\node (d) [right=of bb',xshift=1cm]  {$\{d\}=d/\theta$};
\draw[->] (aa') to [edge label=$S$] (bb');
\draw[->] (bb') to [edge label'=$S$][loop] (bb');
\draw[->] (c) to [edge label'=$S$][loop] (c);
\draw[->] (d) to [edge label'=$S$][loop] (d);
\end{tikzpicture}
\end{center} Now 
\begin{align*} 
	\uparrow_{\mathfrak A/\theta}(a/\theta\to b/\theta)\ \cup \uparrow_{\mathfrak A/\theta}(c/\theta\to d/\theta)=\{z\to S z,\ldots\}\neq\emptyset
\end{align*} and
\begin{align*} 
	\uparrow_{\mathfrak A/\theta}(a/\theta\to b/\theta\righttherefore c/\theta\to d/\theta)=\emptyset
\end{align*} imply
\begin{align*} 
	\mathfrak A/\theta\not\models a/\theta\to b/\theta\righttherefore c/\theta\to d/\theta.
\end{align*}
\end{proof}

\begin{theorem}\label{thm:con2} There is a monounary algebra $\mathfrak A=(A,S)$, a congruence $\theta$ on $\mathfrak A$, and elements $a,b,c,d\in A$ such that
\begin{align*} 
	\mathfrak A/\theta\models a/\theta :b/\theta ::c/\theta :d/\theta \quad\text{whereas}\quad \mathfrak A\not\models a :b ::c :d.
\end{align*}
\end{theorem}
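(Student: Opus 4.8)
The plan is to reuse the monounary algebra $\mathbb A$ from the Example of Section~2 (the $2$-cycle $1\leftrightarrow 2$ together with $3\to 4$ and a loop at $4$), for which we have \emph{already} established $\mathbb A\not\models 1:2:\,:3:4$. It therefore remains only to exhibit a congruence $\theta$ on this $\mathbb A$ under which the collapsed proportion becomes valid, and the natural candidate is the one that fuses the ``defective'' pair $3,4$ while leaving the cycle intact, namely $\theta:=\{\{1\},\{2\},\{3,4\}\}$. With $(a,b,c,d):=(1,2,3,4)$ this will witness the theorem.

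First I would check that $\theta$ is a congruence. The only nontrivial related pair is $3\,\theta\,4$, and since $S(3)=4$ and $S(4)=4$ we get $S(3)\,\theta\,S(4)$, so $\theta$ is compatible with $S$; being evidently an equivalence relation, it is a congruence. Next I would compute the factor algebra: the classes $1/\theta$ and $2/\theta$ still form a $2$-cycle, while $3/\theta=4/\theta=\{3,4\}$ becomes a single loop, since $S/\theta(\{3,4\})=S(3)/\theta=4/\theta=\{3,4\}$. Thus $\mathbb A/\theta$ is the disjoint union of a $2$-cycle and one loop.

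The heart of the argument is then to show $\mathbb A/\theta\models 1/\theta:2/\theta:\,:3/\theta:4/\theta$, which---because $3/\theta=4/\theta$---is precisely a proportion of the shape ``a $2$-cycle is to itself as a pure loop is to itself''. Here I would observe that the loop class $L:=\{3,4\}$ satisfies $Jus_{\mathbb A/\theta}(L\to L)=\{S^k(z)\to S^\ell(z)\mid k,\ell\geq 0\}$, the full set of justifications (the only preimage of a loop is itself), exactly as $Jus_\mathbb A(4\to 4)$ did in the Example, while $Jus_{\mathbb A/\theta}(1/\theta\to 2/\theta)$ and $Jus_{\mathbb A/\theta}(2/\theta\to 1/\theta)$ coincide with $Jus_\mathbb A(1\to 2)$ and $Jus_\mathbb A(2\to 1)$. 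Consequently the four arrow proportions to be verified, namely $1/\theta\to 2/\theta\righttherefore L\to L$, $\,2/\theta\to 1/\theta\righttherefore L\to L$, and their two reverses, are literally the ones computed in the Example when proving $\mathbb A\models 1:2:\,:4:4$, and the same computation yields the result in the quotient.

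The step I expect to be the only real obstacle is the maximality bookkeeping in the quotient: one must confirm that fusing $3$ into $4$ does not create a new competitor in the last argument that would defeat the proportion. This is where passing to the quotient helps rather than hurts---the quotient has strictly fewer elements, hence fewer candidate competitors, and the decisive slot $L\to L$ already carries the maximal (full) justification set, so no competitor can beat it and the verification reduces cleanly to the Example. (Alternatively, one could take $\theta$ to be the total congruence $A\times A$, collapsing $\mathbb A$ to a one-element algebra in which the proportion holds trivially by reflexivity; I prefer the construction above, as it keeps $\theta$ nontrivial and mirrors the previous theorem.)
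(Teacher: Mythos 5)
Your proof is correct, but it takes a genuinely different route from the paper's. The paper constructs a fresh six-element algebra in which $Jus_\mathbb A(a\to b)=\emptyset$ while $Jus_\mathbb A(c\to d)\neq\emptyset$ (so the union of justification sets is non-empty but their intersection is empty, killing the arrow proportion), and then uses a congruence gluing $a$ with $a'$ and $b$ with $b'$ so that in the quotient $a/\theta\to b/\theta$ becomes a single successor step into a loop, matching $c\to d$; all four classes remain distinct in the quotient. You instead recycle the four-element Example algebra together with its already-established failure $\mathbb A\not\models 1:2:\,:3:4$, and collapse $3$ and $4$ so that $c/\theta=d/\theta$ becomes a pure loop; the quotient proportion then reduces to the ``$2$-cycle to itself as loop to itself'' pattern. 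Your maximality check is sound: $Jus_{\mathbb A/\theta}(L\to L)$ is the full justification set, so $Jus_{\mathbb A/\theta}(1/\theta\to 2/\theta\righttherefore L\to L)$ already equals the whole of $Jus_{\mathbb A/\theta}(1/\theta\to 2/\theta)$ and cannot be beaten, and in the other direction neither $Jus_{\mathbb A/\theta}(1/\theta\to 1/\theta)$ (parities disagree) nor $Jus_{\mathbb A/\theta}(1/\theta\to L)=\emptyset$ strictly contains $Jus_{\mathbb A/\theta}(1/\theta\to 2/\theta)$. One small caution: the four arrow proportions are not \emph{literally} those of the Example, since $\mathbb A/\theta$ is a three-element algebra with no analogue of the element $3$; you acknowledge this and correctly argue that removing elements only removes competitors, so the verification goes through. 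The trade-off: your witness is more economical and exhibits a different mechanism (the quotient identifies the right-hand pair), whereas the paper's witness keeps $c/\theta\neq d/\theta$, which makes the quotient proportion less degenerate.
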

\begin{proof} Define the monounary algebra $\mathfrak A$ by
\begin{center}
\begin{tikzpicture} 
\node (a)               {$a$};
\node (a') [right=of a]  {$a'$};
\node (b) [above=of a']  {$b$};
\node (b') [above=of a]  {$b'$};
\node (c) [right=of a',xshift=1cm]  {$c$};
\node (d) [above=of c]  {$d$};

\draw[->] (a) to [edge label'=$S$] (b');
\draw[->] (a') to [edge label'=$S$] (b);
\draw[->] (b) to [edge label'=$S$][loop] (b);
\draw[->] (b') to [edge label'=$S$][loop] (b');
\draw[->] (c) to [edge label'=$S$] (d);
\draw[->] (d) to [edge label'=$S$][loop] (d);
\end{tikzpicture}
\end{center} Since
\begin{align*} 
	\uparrow_\mathfrak A(a\to b)\ \cup \uparrow_\mathfrak A(c\to d)\neq\emptyset \quad\text{and}\quad \uparrow_\mathfrak A(a\to b\righttherefore c\to d)=\emptyset,
\end{align*} we have
\begin{align*} 
	\mathfrak A\not\models a\to b\righttherefore c\to d \quad\text{and therefore}\quad \mathfrak A\not\models a:b::c:d.
\end{align*}

Now define the congruence $\theta:=\{\{a,a'\},\{b,b'\},\{c\},\{d\}\}$ yielding the factor algebra $\mathfrak A/\theta$ given by
\begin{center}
\begin{tikzpicture} 
\node (aa')		{$\{a,a'\}=a/\theta$};
\node (bb') [above=of aa',yshift=1cm]  {$\{b,b'\}=b/\theta$};
\node (c) [right=of aa',xshift=1cm]  {$\{c\}=c/\theta$.};
\node (d) [right=of bb',xshift=1cm]  {$\{d\}=d/\theta$};
\draw[->] (aa') to [edge label=$S$] (bb');
\draw[->] (bb') to [edge label'=$S$][loop] (bb');
\draw[->] (c) to [edge label=$S$] (d);
\draw[->] (d) to [edge label'=$S$][loop] (d);
\end{tikzpicture}
\end{center} We clearly have
\begin{align*} 
	\mathfrak A/\theta\models a/\theta :b/\theta ::c/\theta :d/\theta.
\end{align*}
\end{proof}

\begin{theorem}\label{thm:con3} There is a monounary algebra $\mathfrak A=(A,S)$, a congruence $\theta$ on $\mathfrak A$, and elements $a,b\in A$ such that
\begin{align*} 
	(\mathfrak{A,A/\theta})\not\models a:b::a/\theta:b/\theta.
\end{align*}
\end{theorem}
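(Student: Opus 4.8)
The plan is to refute one of the four arrow proportions whose conjunction defines $a:b:\,:c:d$. The first thing I would record is that the two arrow proportions having $\mathbb A$ as source cannot fail. Since the canonical projection $\pi\colon\mathbb A\to\mathbb A/\theta$ is a homomorphism, every justification $S^k(z)\to S^\ell(z)$ of $a\to b$ in $\mathbb A$ (witnessed by some origin $o$) is also a justification of $a/\theta\to b/\theta$ in $\mathbb A/\theta$ (witnessed by $\pi(o)$), so that $Jus_\mathbb A(a\to b)\subseteq Jus_{\mathbb A/\theta}(a/\theta\to b/\theta)$ and hence
\begin{align*}
	Jus_{(\mathbb A,\mathbb A/\theta)}(a\to b\righttherefore a/\theta\to b/\theta)=Jus_\mathbb A(a\to b),
\end{align*}
which is trivially $\subseteq$-maximal among the intersections obtained by varying the target in $\mathbb A/\theta$. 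The same holds for the inner-symmetric arrow proportion $b\to a\righttherefore b/\theta\to a/\theta$. Therefore the only way to break the proportion is through an arrow proportion having $\mathbb A/\theta$ as source, and I would aim for $(\mathbb A,\mathbb A/\theta)\not\models a/\theta\to b/\theta\righttherefore a\to b$.

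By the same containment this intersection again equals $Jus_\mathbb A(a\to b)$, so to defeat its maximality I must exhibit a target $v\in\mathbb A$ with
\begin{align*}
	Jus_\mathbb A(a\to b)\subsetneq Jus_{\mathbb A/\theta}(a/\theta\to b/\theta)\cap Jus_\mathbb A(a\to v).
\end{align*}
As the right-hand side is contained in $Jus_\mathbb A(a\to v)$, this reduces to realising a strict nesting $Jus_\mathbb A(a\to b)\subsetneq Jus_\mathbb A(a\to v)$ for two distinct targets $b\neq v$ of the common source $a$, subject to the extra justifications of $a\to v$ already lying in the enlarged set $Jus_{\mathbb A/\theta}(a/\theta\to b/\theta)$.

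This nesting is the main obstacle, and it is genuinely restrictive: along a single forward chain, or within a single cycle, distinct targets are reached from $a$ only at pairwise disjoint sets of distances, so their justification sets are incomparable and no nesting can occur. To force nesting I would give $a$ two divergent backward branches feeding a common loop, with only one branch carrying an extra predecessor. Concretely I would take $\mathbb A$ on $\{a,b,e,p\}$ with $S(a)=a$, $S(b)=a$, $S(e)=a$, and $S(p)=e$. A direct computation then yields
\begin{align*}
	Jus_\mathbb A(a\to b)=\{S^k(z)\to z\mid k\geq 1\}\subsetneq\{S^k(z)\to z\mid k\geq 1\}\cup\{S^k(z)\to S(z)\mid k\geq 2\}=Jus_\mathbb A(a\to e),
\end{align*}
where the extra family $\{S^k(z)\to S(z)\mid k\geq 2\}$ arises precisely from the predecessor $p$ of $e$.

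Finally I would choose the congruence that identifies the two branch points, $\theta:=\{\{a\},\{b,e\},\{p\}\}$, which is immediate to verify. In the factor algebra $b/\theta$ then inherits $p/\theta$ as a predecessor, so that $Jus_{\mathbb A/\theta}(a/\theta\to b/\theta)=\{S^k(z)\to z\mid k\geq 1\}\cup\{S^k(z)\to S(z)\mid k\geq 2\}$ coincides with $Jus_\mathbb A(a\to e)$. Taking $v:=e$ gives
\begin{align*}
	Jus_\mathbb A(a\to b)\subsetneq Jus_{\mathbb A/\theta}(a/\theta\to b/\theta)\cap Jus_\mathbb A(a\to e),
\end{align*}
which refutes maximality, hence $(\mathbb A,\mathbb A/\theta)\not\models a/\theta\to b/\theta\righttherefore a\to b$ and therefore $(\mathbb A,\mathbb A/\theta)\not\models a:b:\,:a/\theta:b/\theta$. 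I expect the only delicate point to be the one flagged above—engineering the strict nesting of justification sets—since the two automatically satisfied arrow proportions already show that the naive intuition ``homomorphisms preserve analogical proportions'' is exactly what the construction has to circumvent.
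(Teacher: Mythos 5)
Your counterexample is correct, but it attacks a different arrow proportion than the paper does, and one of your preliminary claims is false. The paper's construction (four elements $a,a',b,b'$ with $S(a)=b'$, $S(a')=b$ and loops at $b,b'$, with $\theta$ collapsing $a$ with $a'$ and $b$ with $b'$) arranges $Jus_\mathbb A(a\to b)=\emptyset$ while $Jus_{\mathbb A/\theta}(a/\theta\to b/\theta)\neq\emptyset$, so the arrow proportion $a\to b\righttherefore a/\theta\to b/\theta$ --- the one with $\mathbb A$ as \emph{source} --- fails because the joint justification set is empty while the union is not. This directly refutes your opening claim that ``the two arrow proportions having $\mathbb A$ as source cannot fail'': your containment $Jus_\mathbb A(a\to b)\subseteq Jus_{\mathbb A/\theta}(a/\theta\to b/\theta)$ and the resulting maximality are correct, but maximality alone does not suffice when $Jus_\mathbb A(a\to b)=\emptyset$, since the definition then requires the \emph{union} to be empty as well. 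So the ``naive homomorphism intuition'' breaks even more cheaply than your analysis suggests, and the paper exploits exactly the case you excluded. That said, this claim is only motivational in your argument; your actual construction ($S(a)=a$, $S(b)=S(e)=a$, $S(p)=e$, with $\theta$ identifying $b$ and $e$) is verifiably correct: the computations of $Jus_\mathbb A(a\to b)$, $Jus_\mathbb A(a\to e)$ and $Jus_{\mathbb A/\theta}(a/\theta\to b/\theta)$ check out, $\theta$ is a congruence, and the strict inclusion
\begin{align*}
\emptyset\neq Jus_{(\mathbb A/\theta,\mathbb A)}(a/\theta\to b/\theta\righttherefore a\to b)\subsetneq Jus_{(\mathbb A/\theta,\mathbb A)}(a/\theta\to b/\theta\righttherefore a\to e)
\end{align*}
defeats maximality of the target $b$, so $(\mathbb A,\mathbb A/\theta)\not\models a:b:\,:a/\theta:b/\theta$. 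What each approach buys: the paper's is shorter because an empty intersection against a non-empty union kills the proportion with no maximality comparison needed; yours is more informative in that it shows the proportion can fail even in the ``hard'' direction where a genuine non-maximality argument is required, at the cost of a slightly larger algebra and the (repairable) overclaim about the other direction.
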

\begin{proof} Define the monounary algebra $\mathfrak A$ by
\begin{center}
\begin{tikzpicture} 
\node (a') {$a'$};
\node (a) [above=of a,yshift=1cm]  {$a$};
\node (b) [right=of a',xshift=1cm] {$b$};
\node (b') [right=of a,xshift=1cm]  {$b'$};
\draw[->] (a) to [edge label=$S$] (b');
\draw[->] (a') to [edge label=$S$] (b);
\draw[->] (b) to [edge label'=$S$][loop] (b);
\draw[->] (b') to [edge label'=$S$][loop] (b');
\end{tikzpicture}
\end{center} Now define the congruence $\theta:=\{\{a,a'\},\{b,b'\}\}$ yielding the factor algebra $\mathfrak A/\theta$ given by
\begin{center}
\begin{tikzpicture} 
\node (a) {$\{a,a'\}=a/\theta$};
\node (b) [right=of a,xshift=1cm] {$\{b,b'\}=b/\theta$.};
\draw[->] (a) to [edge label=$S$] (b);
\draw[->] (b) to [edge label'=$S$][loop] (b);
\end{tikzpicture}
\end{center} Since
\begin{align*} 
	\uparrow_\mathfrak A(a\to b)\ \cup \uparrow_{\mathfrak A/\theta}(a/\theta\to b/\theta)=\{z\to S z,\ldots\}\neq\emptyset
\end{align*} whereas
\begin{align*} 
	\uparrow_\mathfrak A(a\to b \righttherefore a/\theta\to b/\theta)=\emptyset,
\end{align*} we have
\begin{align*} 
	(\mathfrak{A,A/\theta})\not\models a\to b\righttherefore a/\theta\to b/\theta \quad\text{and therefore}\quad (\mathfrak{A,A/\theta})\not\models a :b ::a/\theta :b/\theta.
\end{align*}
\end{proof}


\begin{theorem}\label{thm:con4} There is a monounary algebra $\mathfrak A=(A,S)$, a congruence $\theta$ on $\mathfrak A$, and elements $a,b,c,d\in A$ such that
\begin{align*} 
	a\theta b \quad\text{and}\quad c\theta d \quad\text{whereas}\quad \mathfrak A\not\models a:b::c:d.
\end{align*}
\end{theorem}
\begin{proof} Define the monounary algebra $\mathfrak A$ by
\begin{center}
\begin{tikzpicture} 
	\node (a) {$a$};
	\node (b) [right=of a] {$b$};
	\node (c) [right=of b] {$c$};
	\node (d) [right=of c] {$d$};
	\draw[->] (a) to [edge label'={$S$}][loop] (a);
	\draw[->] (b) to [edge label'={$S$}][loop] (b);
	\draw[->] (c) to [edge label'={$S$}] (d);
	\draw[->] (d) to [edge label'={$S$}][loop] (d);
\end{tikzpicture}
\end{center} and define the congruence $\theta:=\{\{a,b\},\{c,d\}\}$. We then have
\begin{align*} 
	a\theta b \quad\text{and}\quad c\theta d \quad\text{whereas}\quad \mathfrak A\not\models a:b::c:d.
\end{align*}
\end{proof}

\section{Difference proportion theorem}\label{sec:DPT}

Arithmetic or difference proportions are characterized as
\begin{align*} 
	a:b::c:d \quad\Leftrightarrow\quad a-b=c-d
\end{align*} and have been considered already by the ancient Greeks.\footnote{\url{https://en.wikipedia.org/wiki/Proportion_(mathematics)}} In this section, we are going to see that difference proportions naturally occur in the  prototypical infinite monounary algebra given by the natural numbers $\mathbb N=\{0,1,2\ldots\}$ together with the unary successor function $S:\mathbb N\to\mathbb N$ defined by $Sa:=a+1$. This is more surprising than it might appear, as the abstract framework is not geared towards proportions in monounary algebras --- the forthcoming theorem is thus further conceptual evidence of the robustness of the underlying framework:

\begin{theorem}[Difference Proportion Theorem]\label{thm:DPT} For any natural numbers $a,b,c,d\in\mathbb N$, we have
\begin{align*} 
	(\mathbb N,S)\models a:b::c:d \quad\Leftrightarrow\quad a-b=c-d\quad\text{{\em (difference proportion)}}.
\end{align*}
\end{theorem}
\begin{proof} Notice that $\uparrow_{(\mathbb N,S)}(a\to b)$ always contains the non-trivial justification $z\to S^{b-a} z$ in case $a\leq b$, or $S^{a-b} z\to z$ in case $b<a$, which means that $\uparrow_{(\mathbb N,S)}(a\to b)$ is non-empty, for all $a,b\in\mathbb N$. This implies that $\uparrow_{(\mathbb N,S)}(a\to b)\ \cup \uparrow_{(\mathbb N,S)}(c\to d)$ is non-empty as well, which means that $(\mathbb N,S)\models a\to b\righttherefore c\to d$ cannot be justified by trivial justifications alone.

Since $S$ is injective in $\mathbb N$, {\em every} justification $S^k z\to S^\ell z$ of $a\to b\righttherefore c\to d$ in $(\mathbb N,S)$ is a characteristic justification by Uniqueness Lemma 23 in \citeA{Antic22}. By definition, $S^k z\to S^\ell z$ is a justification of $a\to b\righttherefore c\to d$ in $(\mathbb N,S)$ iff, for some $o_1,o_2\in\mathbb N$,
\begin{align*} 
	a=S^k o_1 \quad\text{and}\quad b=S^\ell o_1 \quad\text{and}\quad c=S^k o_2 \quad\text{and}\quad d=S^\ell o_2,
\end{align*} which is equivalent to
\begin{align*} 
	a=k+o_1 \quad\text{and}\quad b=\ell+o_1 \quad\text{and}\quad c=k+o_2 \quad\text{and}\quad d=\ell+o_2.
\end{align*} This holds iff $a-b=c-d$ as desired.
\end{proof}

As a direct consequence of the Difference Proportion \prettyref{thm:DPT}, we have the following analysis of the axioms in \citeA[§4.3]{Antic22} within the natural numbers with successor:

\begin{theorem} All the proportional axioms hold in $(\mathbb N,S)$ except for p-commutativity.\footnote{The monotonicity axiom is irrelevant here as we are interested in monounary algebras only.}
\end{theorem}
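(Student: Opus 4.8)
The plan is to read every axiom through the Difference Proportion \prettyref{thm:Difference_proportion_theorem}, which replaces each atomic proportion $x:y:\,:z:w$ by the single integer equation $x-y=z-w$, with the differences computed in $\mathbb Z$ (so they may be negative). Under this dictionary each axiom, being a universally quantified implication or biconditional between proportion statements, becomes an elementary assertion about integer differences, and the theorem reduces to checking these assertions one by one. I would therefore proceed axiom by axiom, in each case substituting the difference characterisation and verifying (or refuting) the resulting arithmetic statement.

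Most of the verifications are immediate consequences of the ring structure of $\mathbb Z$. Symmetry becomes $a-b=c-d \Rightarrow c-d=a-b$ and inner symmetry becomes $a-b=c-d \Rightarrow b-a=d-c$, i.e.\ symmetry of equality and negation of both sides; reflexivity is $a-b=a-b$. Central permutation becomes $a-b=c-d \Rightarrow a-c=b-d$, which is just the rearrangement $a+d=b+c$ of the hypothesis (note that this axiom fails in general monounary algebras by \prettyref{thm:axioms} but survives here). The various transitivity axioms (plain, inner, and central) reduce to transitivity of equality, since $a-b=c-d$ together with $c-d=e-f$ yields $a-b=e-f$. Finally, the determinism, strong reflexivity, and strong inner reflexivity axioms collapse to cancellation in $\mathbb Z$: for instance $a-a=c-d$ forces $c=d$, $a-b=c-c$ forces $a=b$, and $a-b=a-d$ forces $b=d$. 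None of these steps is deep.

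The sole exception is commutativity, which swaps the two entries of a ratio, e.g.\ $a:b:\,:c:d \Rightarrow b:a:\,:c:d$. In the difference calculus it demands $a-b=c-d \Rightarrow b-a=c-d$, hence $a-b=-(a-b)$, i.e.\ $2(a-b)=0$; since $\mathbb Z$ is torsion-free this can hold only when $a=b$, so commutativity fails for every proportion with $a\neq b$. A concrete witness is $(\mathbb N,S)\models 0:1:\,:0:1$ (reflexivity, or $0-1=0-1$) whereas $(\mathbb N,S)\not\models 1:0:\,:0:1$, because $1-0\neq 0-1$ in $\mathbb Z$.

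I expect the only real obstacle to be bookkeeping rather than mathematics: one must transcribe the precise form of each axiom from \citeA[§4.3]{Antic22} correctly and, crucially, keep the differences in $\mathbb Z$ rather than truncating them in $\mathbb N$ (the whole reduction would break if $a-b$ were read as a natural-number subtraction). Once the axioms are faithfully translated, every case is a one-line identity, and the only conceptual point is the observation that $a-b$ is an \emph{antisymmetric} invariant, which is exactly what obstructs commutativity.
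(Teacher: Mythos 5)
Your proposal is correct and follows essentially the same route as the paper: the paper likewise reduces every axiom to an integer identity via the Difference Proportion \prettyref{thm:Difference_proportion_theorem} (citing the positive part of \prettyref{thm:axioms} for symmetry, inner symmetry, reflexivity, and determinism rather than re-deriving them, and refuting commutativity by the same observation that $a-b\neq b-a$ for $a\neq b$). Your explicit counterexample and the remark about computing differences in $\mathbb Z$ are harmless additions but not a different method.
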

\begin{proof} In addition to the positive part of \prettyref{thm:axioms}, we have the following remaining proofs:
\begin{itemize}
	\item p-Commutativity fails since $a-b\neq b-a$ whenever $a\neq b$.

	\item Central permutation follows from $a-b=c-d \Leftrightarrow a-c=b-d$.

	\item Strong inner p-reflexivity follows from $a-a=c-d \Rightarrow d=c$

	\item Strong p-reflexivity follows from $a-b=a-d \Rightarrow d=b$.

	\item p-Transitivity follows from
	\begin{align*} 
		a-b=c-d \quad\text{and}\quad c-d=e-f \quad\Rightarrow\quad a-b=e-f.
	\end{align*}
	\item Inner p-transitivity follows from
	\begin{prooftree}
		\AxiomC{$a-b=c-d$}
		\AxiomC{$b-e=d-f$}
		\BinaryInfC{$a-b+b-e=c-d+d-f$}
		\UnaryInfC{$a-e=c-f$.}
	\end{prooftree}

	\item Central p-transitivity is a direct consequence of transitivity. Explicitly, we have
	\begin{align*} 
		a-b=b-c \quad\text{and}\quad b-c=c-d \quad\Rightarrow\quad a-b=c-d.
	\end{align*}
\end{itemize}
\end{proof}

We call an ``extern'' unary function $f:A\to A$ a {\em proportional polymorphsism} (or {\em p-polymorphism}) \cite{Antic22-4} on the monounary algebra $\mathfrak A=(A,S)$ iff, for all $a,b,c,d\in A$,
\begin{prooftree}
	\AxiomC{$\mathfrak A \models a:b::c:d$}
	\RightLabel{.}
	\UnaryInfC{$\mathfrak A \models fa:fb::fc:fd$}
\end{prooftree}

\begin{fact} The successor function $S$ is a p-polymorphism on $( \mathbb N,S)$.
\end{fact}
\begin{proof} We have the following derivation:
\begin{prooftree}
\AxiomC{$(\mathbb N,S)\models a:b::c:d$}
\UnaryInfC{$a-b=c-d$}
\UnaryInfC{$Sa-Sb=Sc-Sd$}
\UnaryInfC{$(\mathbb N,S)\models Sa:Sb::Sc:Sd$.}
\end{prooftree}
\end{proof}

\if\isdraft0
\section*{Acknowledgments}

We would like to thank the reviewers for their thoughtful and constructive comments, and for their helpful suggestions to improve the presentation of the article.


\section*{Conflict of interest}

The authors declare that they have no conflict of interest.

\section*{Data availability statement}

The manuscript has no data associated.
\fi

\if\isdraft1\newpage\fi
\bibliographystyle{theapa}
\bibliography{/Users/christianantic/Bibdesk/Bibliography,/Users/christianantic/Bibdesk/Preprints,/Users/christianantic/Bibdesk/Publications,/Users/christianantic/Bibdesk/Unpublished}
\end{document}